\documentclass[10pt,twocolumn,letterpaper]{article}

\usepackage{cvpr}
\usepackage[utf8]{inputenc}
\usepackage{graphicx}
\usepackage{color}
\usepackage{times}
\usepackage{amsmath,mathtools}
\usepackage{amsthm}	
\usepackage{amssymb}
\usepackage{enumerate}
\usepackage{subfigure}
\usepackage{algorithm} 
\usepackage{algpseudocode}
\usepackage{booktabs}
\usepackage{tabularx}
\usepackage{array}
\usepackage{bm}
\usepackage{url}
\usepackage{bbm}
\usepackage{enumitem}
\usepackage{stackengine}
\stackMath

\usepackage[pagebackref=true,breaklinks=true,letterpaper=true,colorlinks,bookmarks=false]{hyperref}

\cvprfinalcopy 

\usepackage{tabularx}
\newcommand{\PBS}[1]{\let\temp=\\#1\let\\=\temp}
\newcommand{\RBS}{\let\\=\tabularnewline}

\DeclareMathOperator{\atantwo}{atan2}
\renewcommand{\vec}[1]{\mathbf{#1}}


\newtheoremstyle{break}
  {\topsep}{\topsep}%
  {\itshape}{}%
  {\bfseries}{}%
  {\newline}{}%

\theoremstyle{break}
\newtheorem{definition}{Definition}

\newtheorem{theorem}{Theorem}

\newcommand{\insertimageC}[5]{ 
\begin{figure}[#5]
\centering
\includegraphics[width=#1\linewidth, clip=true]{figures/#2}
\caption{#3}
\label{#4}
\end{figure}
}


\newcommand{\insertimageStar}[5]{ 
\begin{figure*}[#5]
\centering
\includegraphics[width=#1\linewidth, clip=true]{figures/#2}
\caption{#3}
\label{#4}
\end{figure*}
}

\algnewcommand\algorithmicinput{\textbf{Input:}}
\algnewcommand\INPUT{\item[\algorithmicinput]}
\algnewcommand\algorithmicoutput{\textbf{Output:}}
\algnewcommand\OUTPUT{\item[\algorithmicinput]}

\newcommand{\comment}[1]{}

\newcommand{\suchthat}{\;\ifnum\currentgrouptype=16 \middle\fi|\;}

\newcommand\SmallMatrix[1]{{%
  \normalsize\arraycolsep=0.44\arraycolsep\ensuremath{\begin{bmatrix}#1\end{bmatrix}}}}

\DeclareMathSymbol{@}{\mathord}{letters}{"3B}


\ifcvprfinal\pagestyle{empty}\fi
\begin{document}

\title{A Minimalist Approach to Type-Agnostic Detection of Quadrics in Point Clouds}

\author{Tolga Birdal$^{1,2}$ \quad Benjamin Busam$^{1,3}$ \quad Nassir Navab$^{1,4}$ \quad Slobodan Ilic$^{1,2}$ \quad Peter Sturm$^{5}$ \\
$^1$ Technische Universit{\"a}t M{\"u}nchen, Germany\qquad
$^2$ Siemens AG, Munich, Germany\\
\hspace{-3pt}$^3$ Framos AG, Munich, Germany
\qquad  %
$^4$ Johns Hopkins University, US\qquad
$^5$ INRIA, France\\%
}

\maketitle

\begin{abstract}
This paper proposes a segmentation-free, automatic and efficient procedure to detect general geometric quadric forms in point clouds, where clutter and occlusions are inevitable.
Our everyday world is dominated by man-made objects which are designed using 3D primitives (such as planes, cones, spheres, cylinders, etc.).
These objects are also omnipresent in industrial environments. This gives rise to the possibility of abstracting 3D scenes through primitives, thereby positions these geometric forms as an integral part of perception and high level 3D scene understanding.

As opposed to state-of-the-art, where a tailored algorithm treats each primitive type separately, we propose to encapsulate all types in a single robust detection procedure. At the center of our approach lies a closed form 3D quadric fit, operating in both primal \& dual spaces and requiring as low as 4 oriented-points. Around this fit, we design a novel, local null-space voting strategy to reduce the 4-point case to 3. Voting is coupled with the famous RANSAC and makes our algorithm orders of magnitude faster than its conventional counterparts. This is the first method capable of performing a generic cross-type multi-object primitive detection in difficult scenes. Results on synthetic and real datasets support the validity of our method.
\end{abstract}

\section{Introduction}
Quadrics, or quadratic surfaces, are second order implicit representations, including geometric primitives such as planes, spheres, cylinders, ellipsoids, cones and more.
Due to their ability to cover a wide variety of shapes, many industrial parts and man-made objects are manufactured using quadric solids after designed with specific CAD software. As a result, our environment consists of these mathematical constructs.
This prominent exposure made understanding and reverse engineering of quadrics, where a scene is parsed into primitives, a primary concern in computer vision already in the 80s~\cite{miller1988,birdal2017iccv}. While a large part of these works address the problem of recovering quadrics from 2D images \cite{Cross1998}, in the 3D domain, the research streams still seem to be divided into two branches: \textit{Quadric Fitting} and \textit{Primitive Detection}.
The former, \textit{Quadric Fitting}, concerns the fitting of a general quadric to a clutter-free, potentially noisy scene \cite{Taubin1991,Tasdizen2001,Blane2000}. The latter, \textit{Primitive Detection}, tackles a different problem: the reconstruction of type-specific primitives from cluttered scenes \cite{Allaire2007,Andrews2013,georgiev2016real}. Yet, to this date, the task of automatic and generic quadric detection within clutter or occlusions without necessitating auxiliary preprocessing steps such as segmentation or manual intervention \cite{morwald2013geometric}, remain to be unexplored and unsolved. This is partially because unlike rigid 6-DoF free-form detection $\&$ pose estimation, quadrics have 9 degrees of freedom, and can deform into various forms. Naturally, this problem is a harder one. Although deep neural networks can learn to segment scenes \cite{minto2016scene,garcia2017review}, or show great potential in learning the fitting function and feature extraction~\cite{ppfnet}, jointly solving the detection and surface fitting, to the best of our knowledge, is an open challenge. Moreover, for the problem at hand, data labeling is notoriously exhaustive.

\insertimageC{1}{teaser_pilates2_cropped.pdf}{Our algorithm quickly detects quadric shapes in point clouds under occlusions, partial visibility and clutter. Note that, the algorithm has no clue whether a sphere exists or not and is able to detect the plausible primitive using only 3D geometry.}{fig:teaser}{t!}

One trivial way to attack our problem would be to use an off-the-shelf quadric fitting method in a RANSAC~\cite{Fischler1981} pipeline. Yet, there is a caveat in doing this: RANSAC requires repeated trials of a minimal-case model fitting. Even though degenerate or special quadrics such as cylinders, planes or spheres can be described by three or two or single oriented 3D point(s)~\cite{drost2015local}, it is not the case for a more complex quadric such as an ellipsoid, paraboloid or hyperboloid and the state of the art requires 9 ~\cite{Taubin1991,Tasdizen2001,Blane2000} to 12 \cite{beale2016fitting} points to perform a generic quadric fit. Such large DoF necessity increases the complexity of RANSAC-like approaches significantly. Hence, this seemingly old problem - discovering equations of quadrics in arbitrary and general 3D scenes - awaits an efficient and robust clutter treatment \cite{Petitjean2002,Frahm2006}.

In this paper, we fill these gaps by showing first a mixed primal/dual-space quadric fit, requiring as low as 4 oriented-points (4 points and associated normals). We analyze this fit and around it, devise a local Hough transform method to reduce the 4-point case to 3-points, establishing the minimalist fit known so far. These developments pave the way to an efficient RANSAC loop. We slightly modify the RANSAC pipeline using a pre-clustering of feasible hypotheses. We  provide distance and score metrics on quadric spaces to perform this robust fit.
In a nutshell, our contributions are:
\begin{enumerate}[noitemsep]
	\item A novel closed form quadric fitting capable of using as low as 4 oriented points as supported theoretically,
	\item An effective local voting strategy, reducing the 4-point requirement to 3-points,
	\item A fast \& robust scheme to compute voting parameters,
	\item A 3-point RANSAC algorithm with an efficient score function to verify the hypotheses,
	\item To the best of our knowledge, the first robust pipeline to perform generic type-agnostic multi-object quadric detection in the presence of clutter and occlusion.
\end{enumerate}

Quantitative and qualitative experiments on synthetic and real datasets demonstrate the robustness of our fitting as well as the computational gains. We will make our implementation openly available.
\section{Prior Art}
\label{sec:relatedWork}
\paragraph{Quadrics} Quadrics appear in various domains of vision, graphics and robotics. They are found to be one of the best local surface approximators in estimating differential properties \cite{Petitjean2002}. Thus, point cloud normals and curvatures are oftentimes estimated via local quadrics~\cite{Zhao20163,Digne2016sparse,birdal2015point,di2015geometric}. Other use cases include mesh representation~\cite{guennebaud2007algebraic}, mesh and point-cloud segmentation~\cite{yan2006quadric,liu2016robust}, projective geometry $\&$ reconstruction ~\cite{Gay2017ICCV,Cross1998}, minimal problem solving~\cite{kukelova2016efficient}, grasp motion planning~\cite{Pas2013,Uto2013} and feature extraction~\cite{You2017}.
\vspace{-3mm}
\paragraph{Primitive Detection} Discovering primitives in point clouds has kept the vision researchers busy for a long period of time. This category treats the primitive shapes independently~\cite{georgiev2016real}, giving rise to plane, sphere, cone, cylinder and etc. specific fitting algorithms. Planes, as the simplest forms, are the primary targets of Hough-family~\cite{borrmann20113d}. Yet, to find a larger variety of primitives, RANSAC comes in handy as shown in the prosperous Globfit~\cite{li2011globfit}: a relational local to global RANSAC algorithm. Schnabel et. al.~\cite{Schnabel2007} and Tran et. al.~\cite{Tran2015} also focus on reliable estimation via RANSAC. Qui et.al. extract pipe runs using cylinder fitting \cite{Qiu2014}, and the local Hough transform of Drost and Ilic \cite{drost2015local} efficiently detects spheres, cylinders and planes from point clouds. Lopez et. al.~\cite{lopez2016robust} devise a robust ellipsoid fitting based on iterative non-linear optimization while Sveier et. al.~\cite{sveier2017object} suggest a conformal geometric algebra to spot planes, cylinders and spheres. Andrews~\cite{Andrews2014} deals with paraboloids and hyperboloids in CAD models.\\
Methods and algorithms in this category are quite successful in shape detection, yet they handle the primitives separately. This prevents automatic type detection, or generalized modeling of surfaces such as paraboloids or hyperboloids along with cylinders and spheres.
\vspace{-3mm}
\paragraph{Quadric Fitting} 
Since the 90s, following the pioneering work of Gabriel Taubin \cite{Taubin1991}, quadric fitting is cast as a constrained optimization problem, where the solution is obtained from a Generalized Eigenvalue decomposition of a scatter matrix. 
This work has then been enhanced by 3L fitting~\cite{Blane2000}, proposing a local, explicit ribbon surface composed of three-level-sets of constant Euclidean distance to the object boundary. This fit implicitly uses the local surface information. Later on Tasdizen \cite{Tasdizen2001} improved the local surface properties by incorporating the surface normals to regularize the scatter matrix, which allows for a good and stable fit. Recently, Beale et. al.~\cite{beale2016fitting} introduced the use of a Bayesian prior for the fitting pipeline. All of these methods use at least 9 points. Moreover, they only use surface normals as regularizers and not as additional constraints or are unable to deal with outliers in data. There are a few other studies~\cite{kanatani2005further,Allaire2007}, which improve these standard methods, but many of them either involve non-linear optimization \cite{yan2012variational} or they share the common drawback of requiring 9 independent constraints and no outlier treatment.
\vspace{-3mm}
\paragraph{Quadric Detection} Recovering general quadratic forms from cluttered and occluded scenes is a rather unexplored area. A promising direction was to represent quadrics with spline surfaces ~\cite{morwald2013geometric}, but such approaches have to tackle the increased number of control points, i.e. 8 for spheres, 12 for general quadrics \cite{qin1997representing,qin1998representing}. Segmentation is one way to overcome such difficulties \cite{makhal2017grasping,morwald2013geometric}, but introduces undesired errors especially under occlusions. Other works exploit genetic algorithms \cite{gotardo2004robust} but have the obvious drawback of inefficiency. QDEGSAC~\cite{frahm2006ransac} proposed a 6-point hierarchical RANSAC, but the paper misses out an evaluation or method description for a quadric fit. Petitjean~\cite{Petitjean2002} stressed out the necessity of outlier aware quadric fitting however only ends up suggesting M-estimators for future research. 
\section{Preliminaries}
\label{sec:description}
\vspace{-1mm}
\begin{definition}
A \textbf{quadric} in 3D Euclidean space is a hypersurface defined by the zero set of a polynomial of degree two:
\begin{align}
\label{eq:quadric}
f (x,y,z)
&= Ax^2+ By^2 + Cz^2 + 2Dxy + 2Exz \\
& + 2Fyz + 2Gx + 2Hy + 2Iz + J = 0. \nonumber
\end{align}
\begin{align}
&\vec{q} = \begin{bmatrix}
           A & B & C & D & E & F & G & H & I & J
         \end{bmatrix}^T \\
&\vec{v} = \begin{bmatrix}
			x^2 & y^2 & z^2 & 2xy & 2xz & 2yz & 2x & 2y & 2z & 1
         \end{bmatrix}^T \nonumber
\end{align}
\end{definition}
Using homogeneous coordinates, quadrics can be analyzed uniformly.
The point $\vec{x} = (x,y,z) \in \mathbb{R}^3$ lies on the quadric, if the projective algebraic equation over $\mathbb{RP}^3$ with $d_q(\vec{x}) := [\vec{x}^T  1]\vec{Q} [\vec{x}^T 1]^T$ holds true, where the matrix $\vec{Q} \in \mathbb{R}^{4 \times 4}$ is defined by re-arranging the coefficients:
\begin{equation}
\vec{Q} =
\SmallMatrix{
A & D & E & G\\
D & B & F & H\\
E & F & C & I\\
G & H & I & J
}
,\quad
\nabla \vec{Q} =  2
\SmallMatrix{
A & D & E & G \\
D & B & F & H \\
E & F & C & I
}.
\end{equation}
$d_q(\vec{x})$ can be viewed as an algebraic distance function. Similar to the quadric equation, the gradient at a given point can be written as $\nabla\vec{Q}(\vec{x}) := \nabla\vec{Q} [\vec{x}^T 1]^T$.
Quadrics are general implicit surfaces capable of representing cylinders, ellipsoids, cones, planes, hyperboloids, paraboloids and potentially the shapes interpolating any two of those.
All together there are 17 sub-types~\cite{Weisstein2017}.
Once $\vec{Q}$ is given, this type can be determined from an eigenvalue analysis of $\vec{Q}$ and its subspaces.
Note that quadrics have constant second order derivatives and are practically smooth.
\vspace{-1mm}
\begin{definition}{A \textbf{dual quadric} $\mathbf{Q}^*\sim \mathbf{Q}^{-1}$ is the locus of all planes $\{\mathbf{\Pi}_i\}$ satisfying $\mathbf{\Pi}^T_i\mathbf{Q}^{-1}\mathbf{\Pi}_i=0$} and is formed by the Legendre transform, sending points to tangent planes as covectors.
\end{definition}
\vspace{-1mm}
\noindent Knowing a point lies on the surface gives 1 constraint, and if, in addition, one knows the tangent plane at that point, then one gets 2 more constraints. Such view will lead to a mixed-space fitting, reducing the minimal point necessity.\vspace{-1mm}
\begin{definition}{A \textbf{basis} $\vec{b}$ is a subset composed of a fixed number ($b$) of scene points and hypothesized to lie on the sought surface.}
\end{definition}\vspace{-1mm}
\section{Quadric Detection in Point Clouds}
\label{sec:detection}
\vspace{-1mm}
Our algorithm operates by repeatedly selecting bases. Once a basis $\vec{b}$ is fixed, an under-determined quadric fit parameterizes the solution and a local accumulator space, tied to $\vec{b}$, is formed. All other points in the scene are then combined with this basis to vote for the potential generic primitive. To discover the optimal basis, we perform RANSAC, iteratively hypothesizing solution candidates. Subsequent to such joint RANSAC and voting, a hypotheses custering and ranking takes place, allowing for multiple quadric detection without re-executions of the whole algorithm. 
\vspace{-3mm}
\paragraph{A new perspective to quadric fitting}
Direct solutions for quadric fitting rely either solely on point sets~\cite{Taubin1991}, or use normals as regularizers~\cite{Tasdizen2001}. This is due to vector-vector alignment being a non-linear constraint caused by the normalization $\nabla \vec{Q}(\vec{x}) / \|\nabla \vec{Q}(\vec{x})\|$, i.e. it is hard to know the norm in advance. Similar to $\nabla1$-fitting \cite{Tasdizen1999,Tasdizen2001}, our idea is to align the gradient vector of the quadric $\nabla \vec{Q}(\vec{x}_i)$ with surface normal $\vec{n}_i \in \mathbb{R}^3$, this time using a linear gradient constraint to increase the DoFs rather than regularizing the solution. 
To do so, we introduce per normal homogeneous scale $\alpha_i$ among the unknowns and write: $\nabla \vec{v}^T_i \vec{q} = \alpha_i \vec{n}_i$. Stacking it up for all $N$ points $\vec{x}_i$ and normals $\vec{n}_i$ leads to:
\vspace{-1mm}
\begin{equation}
\vec{A}^\prime
\begin{bmatrix}
\vec{q}\\
\bm{\alpha}
\end{bmatrix}=
\left[\vcenter{\hbox{\addstackgap[1pt]{%
\stackanchor{
\Bigl[\makebox[13ex]{$\cdots\,\vec{v}_i\,\cdots$}\Bigr]^T
\Bigl[\makebox[10ex]{$\vec{0}$}\Bigr]}
{
\Bigl[\makebox[13ex]{$\cdots\,\nabla\vec{v}_i\,\cdots$}\Bigr]^T~
\Bigl[\makebox[10ex]{$\text{\textit{diag}}(\vec{-n}^\prime)$}\Bigr]}
}
}}\right]
\renewcommand{\arraystretch}{1.75}
\begin{bmatrix}
\vec{q}\\
\bm{\alpha}\\
\end{bmatrix}=\vec{0}
\label{eq:systemfull}
\end{equation}
where $\vec{A}^\prime$ is $4N\times(N+10)$, $\bm{\alpha}$ are the unknown homogeneous scales and $\vec{n}^\prime=[\cdots\vec{n}_i^T\cdots]^T$. The solution defines a unique fit in the null space of $\vec{A}^\prime$, which gets to full rank for $\geq 4$ oriented points, the minimalist case known up to now.
\vspace{-1mm}
\begin{theorem}
Although 3 oriented points provide 9 constraints, quadric fitting, as formulated, possesses a trivial solution in addition to the true one.\vspace{-2mm}
\end{theorem}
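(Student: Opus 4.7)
The plan is to exhibit an explicit extra element in the kernel of $\vec{A}'$ for $N=3$: a parameter vector that satisfies every equation of (\ref{eq:systemfull}) while being completely blind to the normals $\vec{n}_i$, and therefore generally distinct from the quadric one wishes to fit.

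First, I would use the fact that any three non-collinear sample points lie on a unique plane $\Pi\subset\mathbb{R}^3$, represented in homogeneous coordinates by some $\bm{\pi}\in\mathbb{R}^4$ satisfying $\bm{\pi}^T[\vec{x}_i^T\;1]^T=0$ for $i=1,2,3$. Let $\vec{Q}_t:=\bm{\pi}\bm{\pi}^T$ be the corresponding rank-one symmetric matrix — geometrically the ``doubled plane'' through the three points — and let $\vec{q}_t\in\mathbb{R}^{10}$ be its coefficient vector under the ordering fixed by $\vec{v}$. By construction $\vec{v}^T\vec{q}_t \equiv \bigl(\bm{\pi}^T[\vec{x}^T\;1]^T\bigr)^2$ as a function of $\vec{x}$.

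Next, I would check that the pair $(\vec{q}_t,\bm{\alpha}=\vec{0})$ annihilates both blocks of $\vec{A}'$. The top block is immediate: $\vec{v}_i^T\vec{q}_t=(\bm{\pi}^T[\vec{x}_i^T\;1]^T)^2=0$ since $\vec{x}_i\in\Pi$. Differentiating the same square in $\vec{x}$ gives $\nabla\vec{v}_i^T\vec{q}_t = 2(\bm{\pi}^T[\vec{x}_i^T\;1]^T)\,\bm{\pi}_{1:3}=\vec{0}$, where $\bm{\pi}_{1:3}$ denotes the normal direction of $\Pi$. Consequently the bottom-block equations $\nabla\vec{v}_i^T\vec{q}_t=\alpha_i\vec{n}_i$ hold with $\alpha_i=0$, and because this argument uses only that $\vec{x}_i\in\Pi$ and never touches $\vec{n}_i$, the solution is ``trivial'' in the precise sense that no orientation information has been exploited.

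The remaining step is linear independence from the intended fit: any non-degenerate quadric that genuinely passes through the three oriented points with matching tangent planes has $\nabla\vec{Q}(\vec{x}_i)\neq\vec{0}$ and therefore a nonzero $\bm{\alpha}$, whereas the doubled-plane solution has $\bm{\alpha}=\vec{0}$. The two are therefore linearly independent, so $\dim\ker\vec{A}'\ge 2$ and the minimal system admits a spurious solution alongside the true one, which is exactly the claim. The main obstacle is disposing of the degenerate configurations — collinear points, or a target quadric that itself happens to be the doubled plane — under which the two kernel directions collapse into one; these are non-generic and easy to exclude, but they also make transparent why a fourth oriented point, or the Hough-style vote the paper introduces shortly, is needed to disambiguate the fit in general.
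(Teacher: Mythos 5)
Your proposal is correct and follows essentially the same route as the paper: both identify the trivial kernel element as the rank-one ``doubled data-plane'' quadric $\vec{Q}_t=\bm{\pi}\bm{\pi}^T$ through the three points paired with $\bm{\alpha}=\vec{0}$, and conclude that the solution set contains (at least) the pencil spanned by this and the true quadric. Your version merely spells out the gradient computation and the linear-independence/degeneracy caveats a bit more explicitly than the paper does.
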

\begin{proof}
Let us call \textbf{data-plane}, the plane spanned by the coordinates of 3 data points. Any rank-1 quadric consists of a single plane $\mathbf{\Pi}$ and can be written as $\vec{Q} = \mathbf{\Pi} \; \mathbf{\Pi}^T$. Hence, for any point $\vec{u}$ on the data plane and thus on the quadric, we have $\vec{Q} \vec{u} = \vec{0}$ and $\nabla\vec{v}_i^T\vec{q}=\vec{0}$. This amounts to the choice $\bm{\alpha}=\vec{0}$. Thus the trivial solution is identified as the rank-1 quadric consisting of the data-plane and zero scale factors. Hence, the estimation problem admits at least a one-dimensional linear family of solutions, spanned by the true quadric and the rank-1 quadric of the data-plane. 
\vspace{-1mm} 
\end{proof}
Indeed, for a non-degenerate surface, following relations hold: $N = 1 \Rightarrow \text{rk}(\vec{A}^\prime) = 4$, $N = 2 \Rightarrow \text{rk}(\vec{A}^\prime) = 7$, $N = 3 \Rightarrow \text{rk}(\vec{A}^\prime) = 9$ and $N > 3 \Rightarrow \text{rk}(\vec{A}^\prime) = 10$. Showing this and that adding further point constraints give diminishing returns is also relatively easy to see by Gaussian Elimination on the matrix, as equations are inter-dependent.

Despite sparsity, the unknowns in this system scale linearly with $N$, leading to a large system to solve. In practice, we approximate them with a single common scale $\bar{\alpha}$, similar to adding a \textbf{soft-regularizer} that tries to force $\alpha_i \gets \bar{\alpha}$~\cite{Tasdizen2001}.
Doing so rescues us from solving the sensitive homogeneous system~\cite{You2017}, and lets us re-write the system in a compact form $\vec{A}\vec{q}=\vec{n}$:
\begin{gather}
\vec{A}=
\SmallMatrix{
x_1^2 & y_1^2 & z_1^2 & 2x_1y_1 & 2x_1z_1 & 2y_1z_1 & 2x_1 & 2y_1 & 2z_1 & 1\\
x_2^2 & y_2^2 & z_2^2 & 2x_2y_2 & 2x_2z_2 & 2y_2z_2 & 2x_2 & 2y_2 & 2z_2 & 1\\
&&&& \vdots &&&&&\\
2x_1 & 0 & 0 & 2y_1 & 2z_1 & 0 & 2 & 0 & 0 & 0\\
0 & 2y_1 & 0 & 2x_1 & 0 & 2z_1 & 0 & 2 & 0 & 0\\
0 & 0 & 2z_1 & 0 & 2x_1 & 2y_1 & 0 & 0 & 2 & 0\\
2x_2 & 0 & 0 & 2y_2 & 2z_2 & 0 & 2 & 0 & 0 & 0\\
0 & 2y_2 & 0 & 2x_2 & 0 & 2z_2 & 0 & 2 & 0 & 0\\
0 & 0 & 2z_2 & 0 & 2x_2 & 2y_2 & 0 & 0 & 2 & 0\\
&&&& \vdots &&&&&
} \nonumber\\
\vec{n}=\begin{bmatrix}
0 & 0 & \dots & {n}^1_x & {n}^1_y & {n}^1_z & {n}^2_x & {n}^2_y & {n}^2_z & \dots
\end{bmatrix}^T \nonumber\\
\vec{q}=\begin{bmatrix}
A & B & C & D & E & F & G & H & I & J
\end{bmatrix}^T 
\label{eq:system}
\end{gather} 
The matrix $\vec{A}$ is only $4N\times 10$ having identical rank properties as Eq.~\ref{eq:systemfull}. To balance the contribution of normal induced constraints we, introduce a scalar weight $w$. Note that \textit{true} quadric fitting is intrinsically of non-linear nature and hence formulating such a linear form is only possible with algebraic metrics, always being subject to certain bias. We pose it a future work to study the bias introduced by our fit.

Fig. \ref{tab:dualspace} shows the minimum number of constraints required for a full-rank fit in primal and dual spaces.
Besides, to obtain a type-specific fit, a minor redesign of $\vec{A}$ is sufficient.
Fig.~\ref{tab:votingspace} depicts the minimal cases for exemplary shapes and the voting space dimension as we describe hereafter. 
In the following, we elaborate on a more efficient way to calculate a solution to Eq.~\ref{eq:system} rather than using a naive RANSAC on 4 tuples by analyzing its solution space. The rest of this section can be thought of as a generic method applicable to any problem formulated as a linear system.

System ~\ref{eq:system} describes an outlier-free closed form fit. To factor in clutter, a direct RANSAC on 9DoF quadric appears to be trivial. Yet, it has two drawbacks:
1) Evaluating the residuals many times is challenging, as it involves a scene-to-quadric overlap calculation in a geometric meaningful way.
2) Even with the proposed fitting, selecting random 4-tuples from the scene might be slow in practice. 
An alternative to that is Hough voting. However, $\vec{q}$ has $9$ DoFs and is not discretization friendly. The complexity and size of this parameter space makes it hard to construct a voting space directly on $\vec{q}$.
Instead, we now devise a scheme entailing a smaller basis cardinality, with local search. 
\vspace{-3mm}
\paragraph{Parameterizing the solution space}
Let $\vec{q}$ be a solution to Eq.~\ref{eq:system}. Then, $\vec{q}$ can be expressed by a linear combination of particular solution $\vec{p}$ and homogeneous solutions $\bm{\mu}_i$ as:
\begin{align}
\label{eq:nullspace}
&\vec{q}= \vec{p} + \sum_i^D \lambda_i \bm{\mu}_i \\
&= \vec{p} +
\begin{bmatrix}
\bm{\mu}_1 & \bm{\mu}_2 & \cdots
\end{bmatrix}
\begin{bmatrix}
\lambda_1 & \lambda_2 & \cdots
\end{bmatrix}^T = \vec{p} + \vec{N}_A\bm{\lambda}.\nonumber
\end{align}
The dimensionality of the null space $\vec{N}_A$ ($D$) depends on the rank of $\vec{A}$, which is directly influenced by the number of points used: $D=10-rk(\vec{A})$. The exact solution could always be computed by including more points from the scene and validating them, i.e., by a local search. For that reason, the fitting can be split up into distinct parts: First a parametric solution as in Eq. \ref{eq:nullspace} is computed using a subset of points (basis) $\vec{b}=\{\vec{x}_1,...,\vec{x}_m\}$ lying on a quadric. Next, the coefficients $\bm{\lambda}$, and thus the solution, can be obtained by searching for other point(s) $(\vec{x}_{m+1},...,\vec{x}_{m+k})$ which lie on the same surface as $\vec{b}$. 
\vspace{-1mm}
\begin{theorem}
\label{lemma:lambda}
If two point sets $\vec{b}=(\vec{x}_1,...,\vec{x}_m)$ and $\vec{X}=(\vec{x}_{m+1},...,\vec{x}_{m+k})$ lie on the same quadric with parameters $\vec{q}$, then the coefficients $\bm{\lambda}=
\begin{bmatrix}
\lambda_1 & \lambda_2 & \cdots
\end{bmatrix}^T$
of the solution space (\ref{eq:nullspace}) are given by the solution of the system:
\begin{equation}
\label{eq:lemma1}
(\vec{A}_k  \vec{N}_A) \bm{\lambda} = \vec{n}_k - \vec{A}_k \vec{p}
\end{equation}
where $\vec{A}_k$, $\vec{n}_k$ are the linear constraints of the latter set $\vec{X}$ in form of (\ref{eq:system}), $\vec{p}$ is a particular solution and $\vec{N}_A$ is a $10 \times D$ stacked null-space basis as in (\ref{eq:nullspace}), obtained from $\vec{b}$.
\end{theorem}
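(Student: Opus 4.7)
The plan is to obtain the claimed system by direct substitution of the affine parameterization of the solution space into the constraints induced by the second point set $\vec{X}$. Both point sets lying on the same quadric means that a single coefficient vector $\vec{q}$ simultaneously satisfies the linear system of Equation \ref{eq:system} built from $\vec{b}$ and the one built from $\vec{X}$; the content of the theorem is to express the latter condition in terms of the unknowns $\bm{\lambda}$ that coordinate the former.

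First I would recall that the fitting system $\vec{A}_b\vec{q} = \vec{n}_b$ assembled from the basis $\vec{b}$ is linear and underdetermined when $\vec{b}$ is a minimal (or sub-minimal) configuration. By the structure theorem for linear systems, its solution set is an affine subspace, namely $\{\vec{p} + \vec{N}_A \bm{\lambda} : \bm{\lambda} \in \mathbb{R}^D\}$, where $\vec{p}$ is any particular solution of $\vec{A}_b \vec{p} = \vec{n}_b$ and the columns of $\vec{N}_A$ span $\ker(\vec{A}_b)$. This is exactly the parameterization of Equation \ref{eq:nullspace}. Since $\vec{q}$ lies on the quadric passing through $\vec{b}$, there exists a unique $\bm{\lambda}$ (once $\vec{p}$ and $\vec{N}_A$ are fixed) such that $\vec{q} = \vec{p} + \vec{N}_A \bm{\lambda}$.

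Next I would apply the constraints coming from $\vec{X}$. By the same construction of Equation \ref{eq:system}, the points of $\vec{X}$ together with their normals yield rows $\vec{A}_k \vec{q} = \vec{n}_k$, which $\vec{q}$ must satisfy because $\vec{X}$ lies on the same quadric. Substituting the parameterization gives $\vec{A}_k \vec{p} + \vec{A}_k \vec{N}_A \bm{\lambda} = \vec{n}_k$, and transposing the known term yields $(\vec{A}_k \vec{N}_A)\bm{\lambda} = \vec{n}_k - \vec{A}_k \vec{p}$, which is precisely Equation \ref{eq:lemma1}.

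There is no genuinely hard step here; the theorem is essentially bookkeeping for solving a partitioned linear system. The only subtlety worth flagging is that $\vec{p}$ and $\vec{N}_A$ are computed from $\vec{b}$ alone while $\vec{A}_k$ and $\vec{n}_k$ are built from $\vec{X}$ alone, so the reduced system lives in the low-dimensional space $\mathbb{R}^D$ rather than in $\mathbb{R}^{10}$; this is what makes the subsequent local voting tractable. I would also briefly remark on existence and uniqueness of $\bm{\lambda}$: the reduced matrix $\vec{A}_k \vec{N}_A$ has at most $D$ columns, so $\bm{\lambda}$ is uniquely determined as soon as enough additional oriented points are used to make this matrix full column rank, which connects the statement back to the minimal-case discussion that precedes it.
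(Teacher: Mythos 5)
Your argument is correct and follows the same route as the paper's own proof: substitute the affine parameterization $\vec{q} = \vec{p} + \vec{N}_A\bm{\lambda}$ obtained from the basis into the constraints $\vec{A}_k\vec{q} = \vec{n}_k$ induced by $\vec{X}$ and rearrange. The extra remarks on uniqueness of $\bm{\lambda}$ and the rank of $\vec{A}_k\vec{N}_A$ are sound additions but not needed for the statement itself.
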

\begin{proof}\let\qed\relax
Let $\vec{q}$ be a quadric solution for the point set $(\vec{x}_1,...,\vec{x}_m)$ and let $(\vec{A}_k,\vec{n}_k)$ represent the $4 k$ quadric constraints for the $k$ points $\vec{X} = (\vec{x}_{m+1},...,\vec{x}_{m+k})$ in form of (\ref{eq:system}) with the same parameters $\vec{q}$.
\begin{figure}[t!]	
\subfigure[]{
  \setlength{\tabcolsep}{2.5pt}
    \begin{tabular}[b]{lcc}
          & \# Pri. & \# Dual\\
    \toprule
    PD-0 & 9 & 0 \\
    PD-1 & 7 & 1 \\
    PD-2 & 5 & 2 \\
    PD-3 & 4 & 3 \\
     \toprule
    \end{tabular}%
  \label{tab:dualspace}%
}
\hfill
\subfigure[]{
\setlength{\tabcolsep}{2.5pt}
\begin{tabular}[b]{lcccc}
          & \# Pri. & \# Dual & $|\vec{b}|$ & VS\\
    \toprule
    Plane & 1 & 1 & 1 & 0\\
    2-Planes & 2 & 2 & 2 & 0 \\
    Sphere & 2 & 1 & 1 & 1\\
    Spheroid & 2 & 2 & 2 & 3 \\
     \toprule
    \end{tabular}%
  \label{tab:votingspace}%
}
\caption{\textbf{(a)} Number of constraints for a minimal fit in Primal(P) or Dual(D) spaces. PD-i refers to $\text{i}^{\text{th}}$ combination. \textbf{(b)} Number of minimal constraints and voting space size for various quadrics.} 
\end{figure}
As $\vec{x}_i \in \vec{X}$ by definition lies on the same quadric $\vec{q}$, it also satisfies $\vec{A}_k \vec{q} = \vec{n}_k$.
Inserting Eq. (\ref{eq:nullspace}) into this, we get:
\begin{align}
\vec{A}_k (\vec{p}+\vec{N}_A\bm{\lambda}) &= \vec{n}_k\\
(\vec{A}_k  \vec{N}_A) \bm{\lambda} &= \vec{n}_k - \vec{A}_k \vec{p}.
\label{eq:lambdas2}
\end{align}
\end{proof}
Solving Eq. (\ref{eq:lambdas2}) for $\bm{\lambda}$ requires a multiplication of a $4 k \times 10$ matrix with a $10 \times m$ one and ultimately solving a system of $4 k$ equations in $m$ unknowns. This is much more efficient for $k < m$ than re-solving the system (\ref{eq:system}) and resembles updating the solution online for a stream of points.
\vspace{-3mm}
\paragraph{Local voting for quadric detection}
Given a fixed basis composed of $b$ points $(b>0)$ as in Fig. \ref{fig:basis}, a parametric solution can be described. The actual solution can then be found quickly as explained above by incorporating new points lying on the same quadric. Thus, the problem of quadric detection is de-coupled into
1) finding a proper basis and
2) searching for compatible scene points. In this part, we assume the basis is correctly found and explain the search by voting. For a fixed basis $\vec{b}_i$ on a quadric, we form the null-space decomposition of the under-determined system $\vec{A}_i\vec{q}=\vec{n}_i$. We then sample further points from the scene and compute the required coefficients $\bm{\lambda}$. Thanks to Thm. \ref{lemma:lambda}, this can be done efficiently. Sample points lying on the same quadric as the basis (inliers) generate the same $\bm{\lambda}$ whereas outliers will produce different values. Therefore we propose to construct a voting space on $\bm{\lambda}$ and cast votes to maximize the local consensus (Fig.~\ref{fig:basis}). The size of the voting space is a design choice and depends on the size of $\vec{b}_i$ vs. the DoFs desired to be recovered (see Fig.~\ref{tab:votingspace}). We find from Fig.~\ref{tab:dualspace} that using a 3-point basis is advantageous for a generic quadric fit due to 1D search space.
\insertimageC{1}{basis_figure_cropped.pdf}{Once a basis is randomly hypothesized, we look for the points on the same surface by casting votes on the null-space. The sought pilates ball (likely quadric) is marked on the image and below that lies the corresponding filled accumulator by KDE~\cite{rosenblatt1956remarks}.}{fig:basis}{t!}
\vspace{-3mm}
\paragraph{Efficient computation of voting parameters}
For a 3-point basis, adding a fourth sample point $\vec{x}_4$ completes $rk(\vec{A}) = 10$ and a unique solution can be computed.
Yet, as we will select multiple $\vec{x}_4$ candidates per basis, hypothesized in a RANSAC loop, an efficient scheme is required, i.e. it is undesirable to re-solve Eq. \ref{eq:system} for each incoming $\vec{x}_4$ tied to the basis.
It turns out that once again, the solution can be obtained directly from Eq. (\ref{eq:nullspace}):
\begin{theorem}
\label{lemma:lambda_one}
If the null-space is one dimensional (with only 1 unknown) it holds $\vec{N}_A\bm{\lambda} = \lambda_1 \bm{\mu}_1$ and the computation in Thm. \ref{lemma:lambda} reduces to the explicit form:
\begin{align}
\label{eq:lambdas_fast}
\lambda_1 = ({\vec{A}_1\vec{N}_A})^T/{\|\vec{A}_1\vec{N}_A\|^2} \cdot (\vec{n}_1 - \vec{A}_1\vec{p})
\end{align}
\end{theorem}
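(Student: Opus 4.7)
The plan is to specialize Theorem~\ref{lemma:lambda} to the current setting and recognize the resulting system as an overdetermined but scalar-unknown equation that admits a closed-form least-squares solution. Because the hypothesis collapses $\bm{\lambda}$ to a single unknown, all of the linear algebra effectively lives in a $1$-dimensional column space, which is what produces the explicit rational expression claimed.

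First I would take $k = 1$ in Theorem~\ref{lemma:lambda}, so that $\vec{A}_k$ becomes a $4 \times 10$ block $\vec{A}_1$ encoding the four constraints contributed by the single added oriented sample $\vec{x}_4$ (one algebraic distance constraint plus three gradient-to-normal constraints), and $\vec{n}_k$ becomes the corresponding $4$-vector $\vec{n}_1$. Under the one-dimensional null-space hypothesis, $\vec{N}_A$ reduces to the single column $\bm{\mu}_1 \in \mathbb{R}^{10}$ and $\bm{\lambda}$ reduces to the scalar $\lambda_1$, so Eq.~\ref{eq:lemma1} specializes to
\begin{equation*}
(\vec{A}_1 \bm{\mu}_1)\, \lambda_1 \;=\; \vec{n}_1 - \vec{A}_1 \vec{p},
\end{equation*}
a set of four linear equations in the single scalar $\lambda_1$, with both sides in $\mathbb{R}^4$.

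Next I would solve this overdetermined scalar system by the Moore--Penrose pseudo-inverse of the tall vector $\vec{A}_1 \bm{\mu}_1$, which for any nonzero column vector $\vec{v}$ is simply $\vec{v}^T / \|\vec{v}\|^2$. Equivalently, one forms the normal equation by left-multiplying both sides by $(\vec{A}_1 \bm{\mu}_1)^T$ and dividing by the scalar $\|\vec{A}_1 \bm{\mu}_1\|^2$. Identifying $\vec{A}_1 \vec{N}_A$ with $\vec{A}_1 \bm{\mu}_1$ under the one-dimensionality assumption then yields exactly the claimed expression for $\lambda_1$.

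The only subtle point, which I would treat as the main (and essentially only) obstacle, is verifying that the denominator $\|\vec{A}_1 \bm{\mu}_1\|^2$ is nonzero, so that the formula is well-defined. This is equivalent to the new sample $\vec{x}_4$ contributing at least one constraint that is independent of the basis, i.e.\ to $\vec{x}_4$ promoting $\mathrm{rk}(\vec{A})$ from $9$ to $10$ as noted in the discussion following Eq.~\ref{eq:systemfull}. Under the standing non-degeneracy assumption this always holds, and I would close by remarking that whenever the basis together with $\vec{x}_4$ genuinely lies on a common quadric, the residual $\vec{n}_1 - \vec{A}_1 \vec{p}$ lies in the span of $\vec{A}_1 \bm{\mu}_1$, so the least-squares solution is in fact an exact solution of the original system.
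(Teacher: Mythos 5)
Your proposal is correct and follows essentially the same route as the paper: specialize Eq.~\ref{eq:lemma1} to a one-dimensional null space, so it reads $\lambda_1(\vec{A}_1\bm{\mu}_1)=\vec{n}_1-\vec{A}_1\vec{p}$, and apply the Moore--Penrose pseudoinverse of the vector $\vec{A}_1\bm{\mu}_1$, namely $\bm{v}^{+}=\bm{v}^T/(\bm{v}^T\bm{v})$, to obtain Eq.~\ref{eq:lambdas_fast}. Your added observations on the nonvanishing of $\|\vec{A}_1\bm{\mu}_1\|^2$ and on exactness of the least-squares solution when $\vec{x}_4$ truly lies on the quadric are sensible refinements but do not change the argument.
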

\begin{proof}
Let us re-write Eq. \ref{eq:lambdas2} in terms of the null space vectors: $\lambda_1(\vec{A}_1\bm{\mu}_1) = \vec{n}_1 - \vec{A}_1 \vec{p}$.
A solution $\lambda_1$ can be obtained via Moore-Penrose pseudoinverse~\cite{Moore1920} as $\lambda_1 = (\vec{A}_1\bm{\mu}_1)^{+}(\vec{n}_1 - \vec{A}_1 \vec{p})$.
Because for 1-dimensional null spaces $\vec{A}_1\bm{\mu}_1$ is a vector, ${}^{+}$ operator is defined as: $\bm{v}^{+}=\bm{v}^T / (\bm{v}^T\bm{v})$. Substituting this in Eq. \ref{eq:lemma1} gives Eq. \ref{eq:lambdas_fast}.
\end{proof}
\vspace{-2mm}
Thm. \ref{lemma:lambda_one} enables a very quick computation of the parameter hypothesis in case of 1 point. A MATLAB implementation takes ca. $30 \mu s$ per $\lambda$. Besides, for a 3-point method, inclusion of only a single equation is sufficient, letting the normal of the $4^{th}$ point remain unused and amenable for verification of fit. We only accept to vote a candidate if the gradient of the fit quadric agrees with the normal at $4^{th}$ point: $({\nabla \vec{Q}(\vec{x}_4)}/{\| \nabla \vec{Q}(\vec{x}_4) \|}) \cdot \vec{n}(\vec{x}_4) > \tau_n$.
\vspace{-1mm}
\paragraph{Quantizing $\bm{\lambda}$}
To vote on $\bm{\lambda}$, we need a linear, bounded and discrete space. Yet, $\bm{\lambda}$ is not quantization-friendly, as it is unbounded and has a non-linear effect on the quadric shape (see Fig. \ref{fig:lambdas}).
Thus, we seek to find a geometrically meaningful transformation to a bounded and well behaving space so that quantization would lead to little bias and artifacts. From a geometric perspective, each column of $\vec{N}_A$ in Eq. \ref{eq:nullspace} is subject to the same coefficient $\vec{\lambda}$, representing the slope of a high dimensional line in the solution space. Thus, it could as well be viewed as a rotation. For 1D null-space, we set: $\theta = \text{atan2}\big(\,{(y_2-y_1)}\, , \,{(x_2-x_1)}\,\big)$, where $[x_1,y_1,\cdots]^T=\vec{p}$ and $[x_2,y_2 \cdots ]^T$ is obtained by moving in the direction $\vec{N}_A$ from the particular solution $\vec{p}$ by an offset $\lambda$.\footnote{Simple $tan^{-1}(\lambda)$ could work but would be more limited in the range.} This new angle $\theta$ is bounded and thus easy to vote for. As the null-space dimension grows, $\vec{\lambda}$ starts to represent hyperplanes, still preserving the geometric meaning, i.e. for $d>1$, different $\bm{\theta}=\{\theta_i\}$ can be found.
\begin{figure*}[t!]
\begin{center}
\subfigure[Effect of $\lambda$ on the surface geometry.]{
\includegraphics[width=0.64505\linewidth]{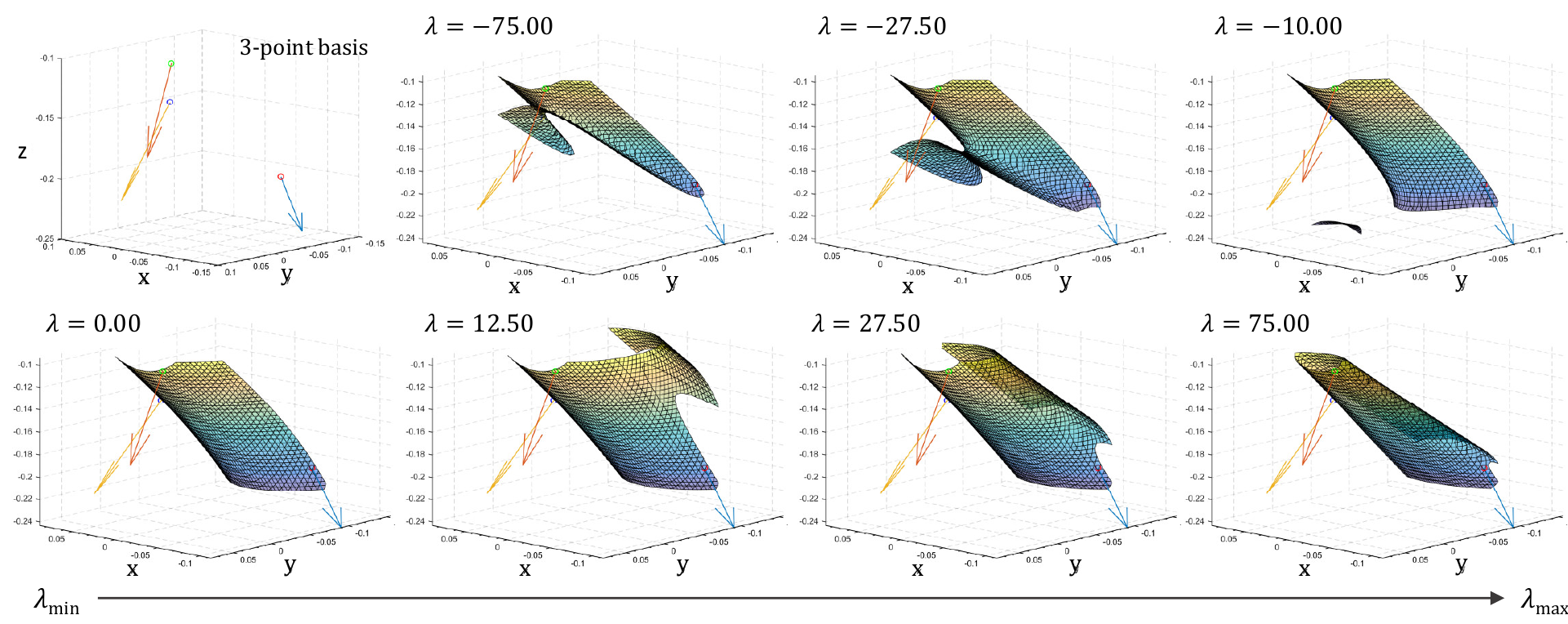} 
\label{fig:lambdas}
}
\subfigure[Distributions related to $\lambda$.]{
\includegraphics[width=0.3215\linewidth]{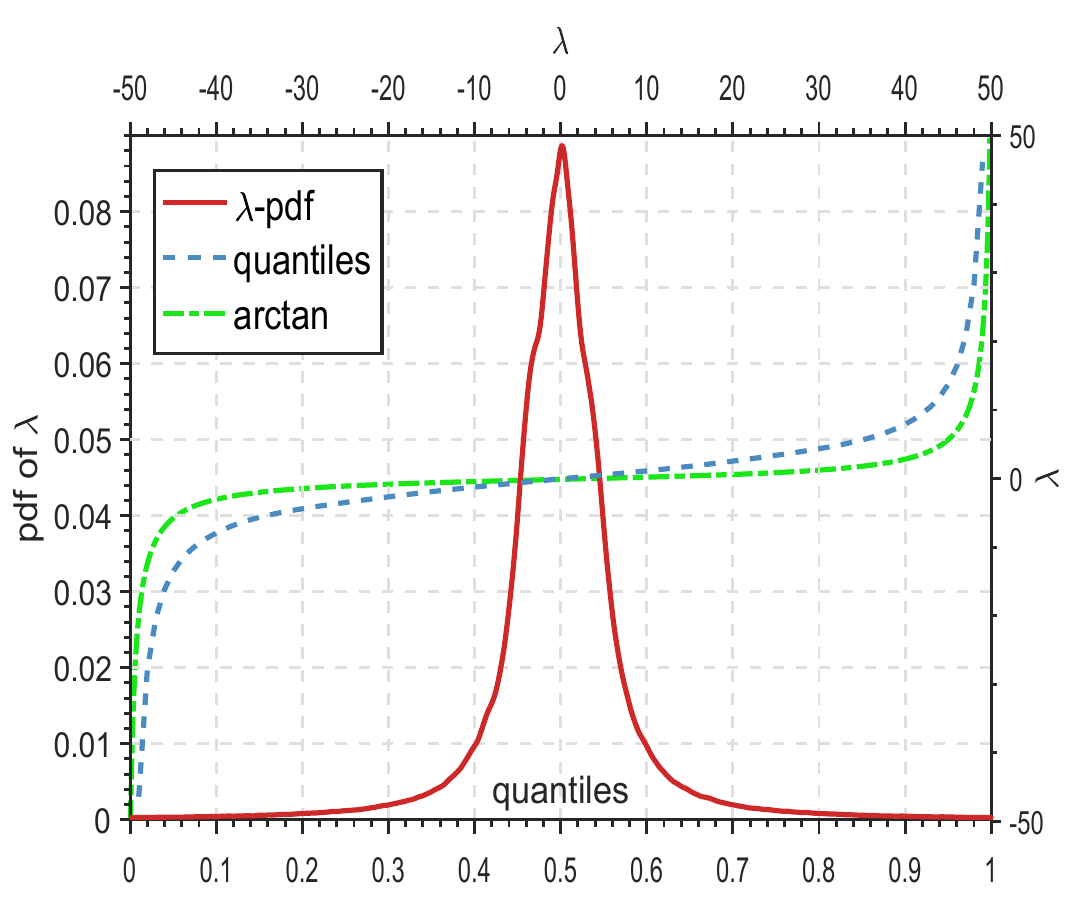}
\label{fig:lambda_pdf}
}
\end{center}
   \caption{Characteristics of $\bm{\lambda}$. \textbf{(a)} We compute null-space decomposition for a fixed basis and vary $\lambda$ from -75 to 75 to generate different solutions $\vec{q}$ along the line in the solution space. The plot presents the transition of the surface controlled by $\lambda$. \textbf{(b)} Statistical distribution of the solution-space coefficient and our quantization function: PDF (red curve) and inverse CDF (dashed blue-curve) of $\lambda$ over collected data, and $tan^{-1}$ function (green-line). Note that our quantization function also well aligns with practice. Smaller $|\lambda|$ results in finer binning.}
\end{figure*}
\vspace{-1mm}
\paragraph{Hypotheses Aggregation} Up to this point, we have described how to find plausible quadrics given local triplet bases. As mentioned, to discover the basis lying on the surface, we employ RANSAC~\cite{Fischler1981}, where each triplet might generate a hypothesis to be verified. Many of those will be similar as well as dissimilar. 
Thus, the final stage of the algorithm aggregates the potential hypotheses to reduce the number of candidate surfaces and to increase the per quadric confidence.
Not to sacrifice further speed, we run an agglomerative clustering similar to~\cite{birdal2015point} in a coarse to fine manner: 
First an algebraic but fast distance measure helps to cluster the obvious hypotheses:
\begin{align}
d_{close}(\vec{Q}_1, \vec{Q}_2) 
:= \mathbbm{1}( \|\vec{q}_1-\vec{q}_2 \|_1 < \tau) 
\cdot \| \vec{Q}_1 \vec{Q}_2^{+} - \vec{I} \|_F 
\label{eq:qdclose}
\end{align}
where $\vec{I} \in \mathbb{R}^{4 \times 4}$ is the identity matrix and $\mathbbm{1}: \mathbb{R} \rightarrow \left\{0, 1\right\}$ the indicator function. Second, a pseudo-geometric one is executed only on a subset of the surviving quadrics:
\begin{align}
&d_{far}(\vec{Q}_1, \vec{Q}_2):= \\
&1-\frac{1}{K}\sum\limits_{i=1}^{K}
\mathbbm{1}\left(\lvert\vec{x}_i^T\vec{Q}_1\vec{x}_i\rvert<\tau\right) \cdot
\mathbbm{1}\left(\lvert\vec{x}_i^T\vec{Q}_2\vec{x}_i\rvert<\tau\right) \cdot \nonumber\\
& \mathbbm{1}\left(1-\vec{n}_i \cdot \nabla\vec{Q}_1(\vec{x}_i) <\tau_n\right) \cdot
\mathbbm{1}\left(1-\vec{n}_i \cdot \nabla\vec{Q}_2(\vec{x}_i) <\tau_n\right)\nonumber
\label{eq:qdfar}
\end{align}
where the scene is composed of $K$ points $\{\vec{x}_i\}$. Each quadric is associated a score proportional to the number of points with compatible normals, found to lie on it. This score is used to re-rank the detections.

\section{Experimental Evaluation and Discussions}
\insertimageStar{1}{synthetic_tests_results5.pdf}{Synthetic evaluations. \textbf{(A,C)} 
Mean geometric point(A) and angle(C) errors for different quadric fitting methods.
The per point error is measured as the average point-to-mesh distance from every gnd. truth vertex to the fitted quadric.
The angular error (dashed) is computed as the negated dot product between quadric gradient and the gnd. truth normal.
\textbf{(B)} gives exemplary fitting results while \textbf{(D)} shows the average error of the gradient norm compared to the ground truth.
\textbf{(E)} plots the speed and detection rate on synthetic data.}{fig:synth_exp}{t!}
\paragraph{Implementation details}
Planes are singular cases and occupy large spaces of 3D scenes.
A preprocessing removes them using our type specific plane fit. Normalizing the points to a unit ball is also necessary to increase the numerical stability~\cite{hartley1997defense}.
Next, we downsample the scene using a spatial voxel-grid~\cite{birdal2017iros} enforcing a minimum distance of $\tau_s \cdot diam(\vec{X})$ between the samples ($0.025 \leq \tau_s \leq 0.05$) and compute the normals~\cite{Hoppe1992} on this subset.
What follows is an iterative selection of triplets from this reduced set.
At each iteration, once $\vec{x}_1$ is fixed, we query the points which are likely to be combined with $\vec{x}_1$ to form the 3-point basis $\vec{b}$.
The rest of the points are then randomly selected from this subset.
Triplets which are found to be collinear or which do not result in a rank-9 system are directly removed. We also hash the triplets such that any basis is seen only once. 
\vspace{-3mm}
\paragraph{Synthetic tests of fitting}
To asses the accuracy of the proposed fitting, we generate a synthetic test set of multitudes of random quadrics and compare our method with the fitting procedures of Taubin~\cite{Taubin1991}, Tasdizen~\cite{Tasdizen2001}, Andrews~\cite{Andrews2014}, and Beale~\cite{beale2016fitting} as well as our full, least squares fit.
Gaussian noise with $\sigma=[0\% - 5\%]$ relative to the size $s$ of the quadric is added to the vertices before the methods are run.
At each noise level, 10 random quadrics are tested.
We report the average point-to-mesh distance and the angle deviation as well as the runtime performances in Fig. \ref{fig:synth_exp}.
For the constrained fitting method~\cite{Andrews2014} we pre-specified the type, which might not be possible in a real application. We perform not single but 20 fits per set. 
Although, our fit is designed to use a minimal number of points, it also proves to be robust when more points are added and is among the top fitters for the distance \& angle errors. This also demonstrates that gradient alignment cannot be surpassed by simply inserting more primal points. Fig.~\ref{fig:synth_exp}{\color{red}D} shows that norms of the gradient of our quadrics also align well with the ground truth, favoring the validity of our approximation. 
Our full method performs the best on low noise levels but quickly destabilizes. This is because the system is biased to compute correct norms and it has increased parameters.
We believe the reason for our compact fit to work well is the soft constraint of common scale factor acting as a weighted regularizer towards special quadrics. When this constraint cannot be satisfied, the solution settles for an acceptable shape.
We also found that for a visually appealing fit, the normal alignment is crucial which is why the closest shapes arise from our method and~\cite{beale2016fitting}. The latter suffers from a higher point error due to the use of the strong ellipsoid prior.
\begin{figure}[t!]
\centering
\subfigure[\textbf{(i)} Images captured by an industrial structured light sensor (1-5) and Kinect (last image). \textbf{(ii)} Corresponding 3D scene. \textbf{(iii)} Detected quadric, shown without clipping. \textbf{(iv)} Quadric in (iii) clipped to the points it lies on.]{
\includegraphics[width=\columnwidth, clip=true]{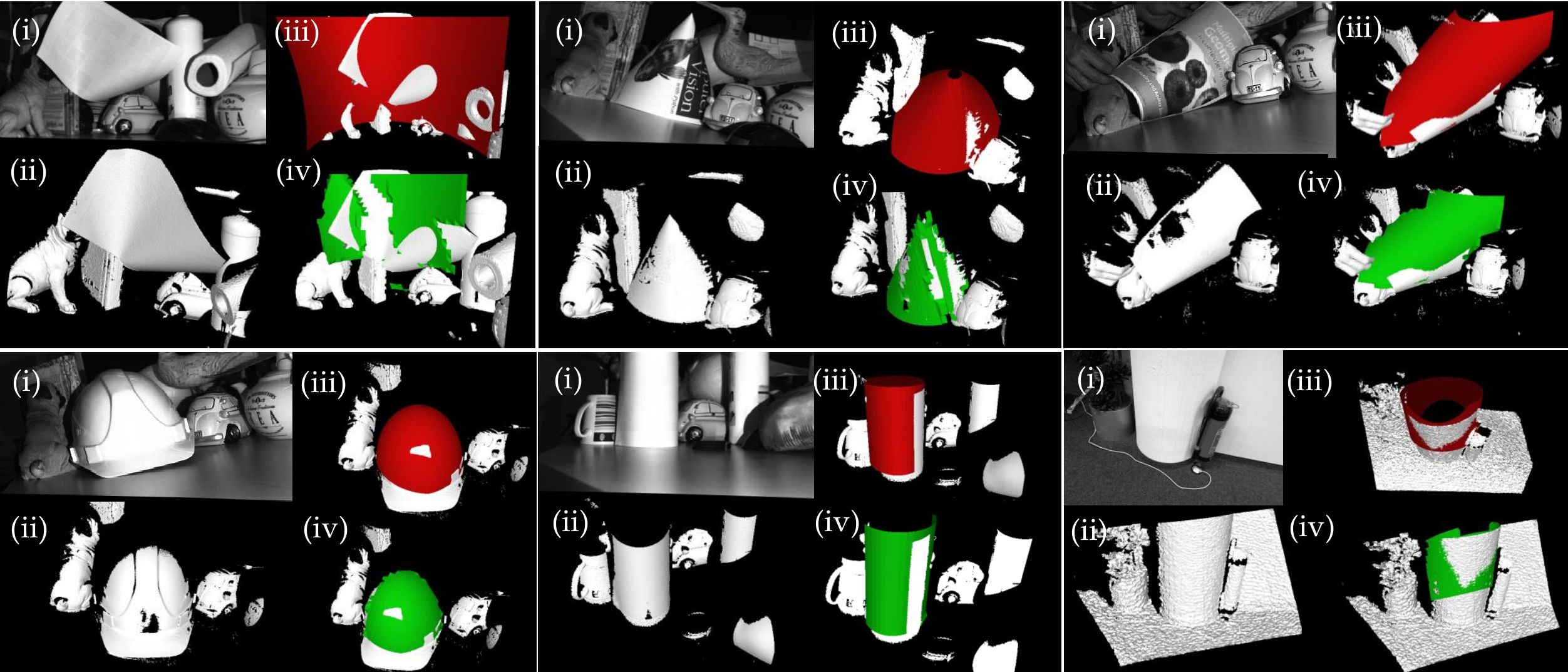}
\label{fig:qualitative}}
\subfigure[Speed and detection results.]{
\includegraphics[width=0.48815\columnwidth, clip=true]{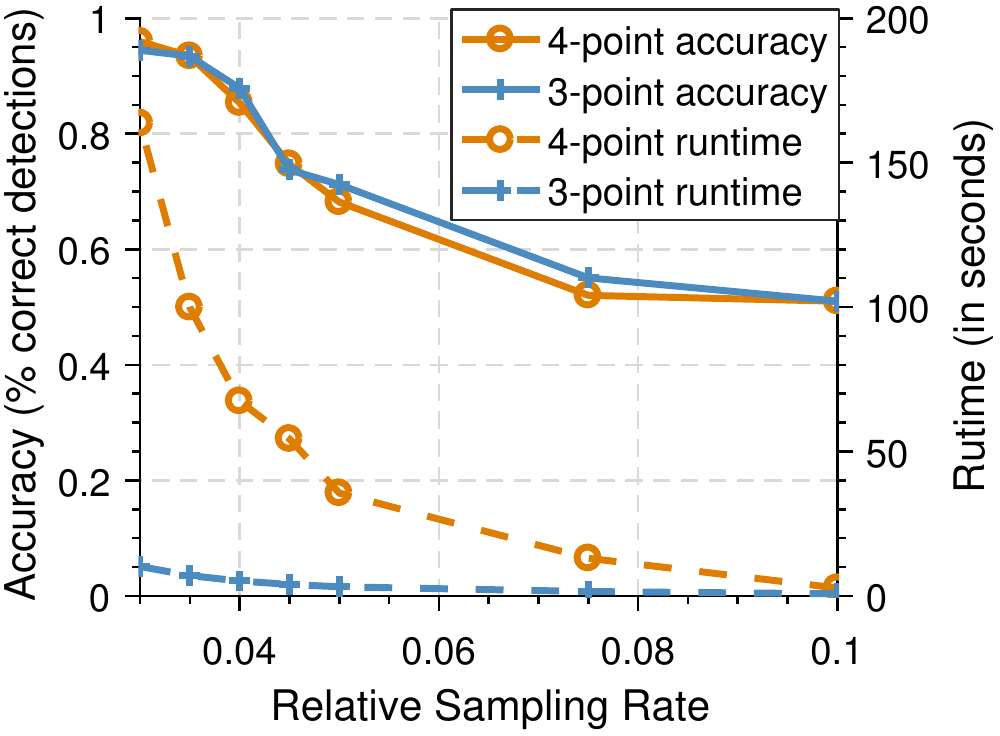}
\label{fig:real}}
\hfill
\subfigure[Distribution of errors of the fit.]{
\includegraphics[width=0.46\columnwidth, clip=true]{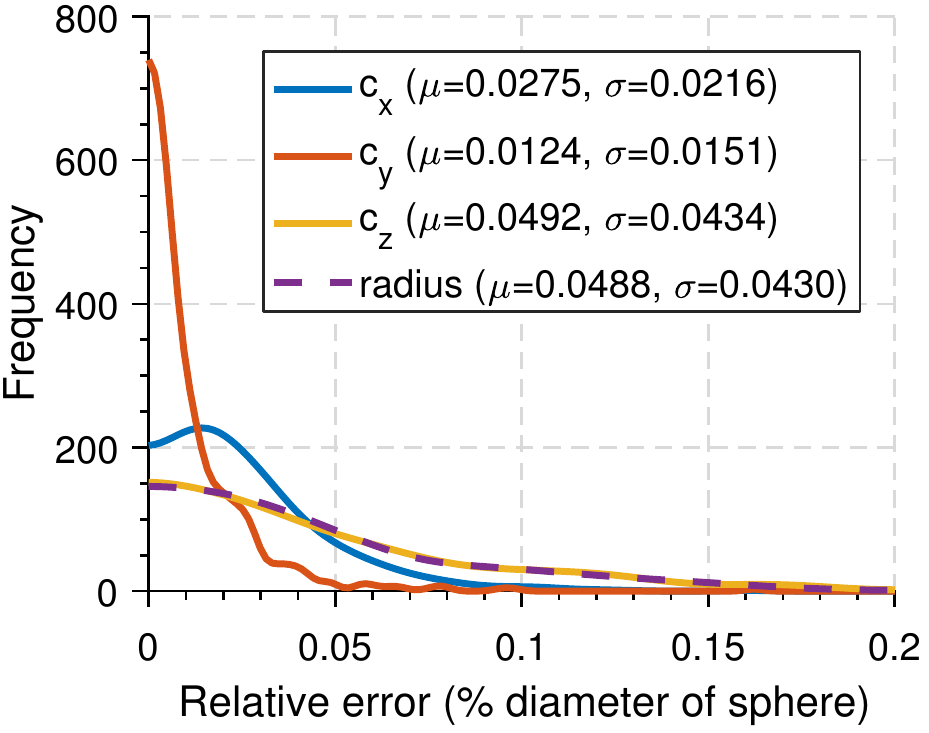}
\label{fig:poseacc}}
\caption{Experiments on real datasets.}
\vspace{-0.5mm}
\end{figure}
\vspace{-5mm}
\paragraph{Is $\atantwo$ a valid transformation for $\bm{\lambda}$?}
To assess the practical validity of the quantization, we collect a set of 2.5 million oriented point triplets from several scenes and use them as bases to form the underdetermined system $\vec{A}$. We then sample the $4^{th}$ point from those scenes and compute $\lambda$. Next, we establish the probability distribution $p(\lambda)$ for the whole collection to calculate the quantiles, mapping $\lambda$ to bins as the inverse CDF. A similar procedure has been applied to cross ratios in~\cite{Birdal2016}. We plot the findings together with the $\atantwo$ function in Fig. \ref{fig:lambda_pdf} and show that empirical distribution and $\atantwo$ follow a similar trend, justifying that our quantizer is well suited to the data behavior. 
\vspace{-3mm}
\paragraph{Quadric detection on a real dataset} 
There is no well-established dataset of real scenes for quadric detection.
Therefore we collect our own set.
We use a phase-shift stereo structured light scanner and capture 35 3D scenes of 5 different objects within clutter and occlusions.
Our objects are bending papers, helmet, paper towel and cylindrical spray bottle.
Other objects are included to create clutter.
To obtain the ground truth, for each scene, we generated a visually acceptable set of quadrics using 1) \cite{Schnabel2007} when shapes represent known primitives 2) by segmenting the cloud manually and performing a fit when shapes are more general.
Each scene then contains 1-3 ground truth quadrics.
To assess the detection accuracy, we manually count the number of detected quadrics aligning with the ground truth.
We compare the 4-point and 3-point algorithms, both of which we propose.
We also tried the naive 9-point RANSAC algorithm (with \cite{Taubin1991}), but found it to be infeasible when the initial hypotheses of the inlier set is not available. Fig. \ref{fig:qualitative} visualizes the detected quadrics both on our dataset and on a 3D data captured by Kinect v1.
Fig. \ref{fig:real} presents our accuracy over different sampling rates along with the runtime performance.
It is clear that our 3-point method is on par with the 4-point variant in terms of detection accuracy, while being significantly faster.
We also evaluate our detector on the large objects dataset of \cite{Choi2016} without further tuning. Tab. \ref{tab:accuracy} shows $100\%$ accuracy in locating a frontally appearing ellipsoidal rugby ball over a $1337$ frame sequence without type prior. While such scenes are not particularly difficult, it is noteworthy that we manage to generate the similar quadric repeatedly at each frame within $5\%$ of the quadric diameter.
\begin{table}[t!]
  \centering
  \caption{Detection accuracy on real datasets.}
  \setlength\tabcolsep{2 pt}
  \resizebox{\columnwidth}{!}{
    \begin{tabular}{lccccc}
    \toprule
          & Dataset & \# Objects & \multicolumn{1}{c}{Type} & Occlusion & \multicolumn{1}{c}{Accuracy} \\
    \midrule
    Pilates Ball 1 & Ours  & 580   & \multicolumn{1}{c}{Generic} & Yes   & \multicolumn{1}{c}{94.40\%} \\
    Rugby Ball & \cite{Choi2016}    & 1337  & \multicolumn{1}{c}{Generic} & No    & \multicolumn{1}{c}{100.00\%} \\
    Pilates Ball 2 & \cite{Choi2016}    & 1412  & \multicolumn{1}{c}{Sphere} & Yes   & \multicolumn{1}{c}{100.00\%} \\
    Big Globe & \cite{Choi2016}    & 2612  & \multicolumn{1}{c}{Sphere} & Yes   & \multicolumn{1}{c}{90.70\%} \\
    Small Globe & \cite{Choi2016}    & 379   & \multicolumn{1}{c}{Sphere} & Yes   & \multicolumn{1}{c}{56.90\%} \\
    Apple & \cite{Choi2016}    & 577   & \multicolumn{1}{c}{Sphere} & Yes   & \multicolumn{1}{c}{99.60\%} \\
    Football & \cite{Choi2016}    & 1145  & \multicolumn{1}{c}{Sphere} & Yes   & \multicolumn{1}{c}{100.00\%} \\
    Orange Ball & \cite{Choi2016}    & 270   & \multicolumn{1}{c}{Sphere} & Yes   & \multicolumn{1}{c}{93.30\%} \\
    \bottomrule
    \end{tabular}%
    }
  \label{tab:accuracy}%
\end{table}%
\insertimageStar{1}{sphereDetection_cropped.pdf}{Qualitative visualizations of sphere detection in the wild: Our algorithm is very successful in difficult scenarios including clutter, occlusions, large distances. Note that the sphere is detected in 3D only using the point clouds of depth images and we draw the apparent contour of the quadric. The RGB pictures are also included in the top row to ease the visual perception.}{fig:spherefit}{t!}
\vspace{-3mm}
\paragraph{How fast is it?}
As our speed is influenced by the factors of closed form fitting, RANSAC and local voting, we evaluate the fit and detection separately. Fig. \ref{fig:synth_exp}{\color{red}E} shows the runtime of fitting part. Our method scales linearly due to the solution of an $4N\times 10$ system, but it is the fastest approach when $<300$ points are used. Thus, it is more preferred for a minimal fit. Fig. \ref{fig:real} then presents the order of magnitude speed gain, when our 4-point C++ version is replaced by the 3-point without accuracy loss. Although the final runtime is in the range of 1-2 seconds, our 3-point algorithm is still the fastest known method in segmentation free detection.
\vspace{-3mm}
\paragraph{How accurate is the fit?}
To evaluate the pose accuracy on real objects, we use closed geometric objects of known size from the datasets and report the distribution of the errors, and its statistics. We choose \textit{football} and \textit{pilates ball 1} as it is easy to know their geometric properties (center and radius). We compare the radius to the true value while the center is compared to the one estimated from a non-linear refinement\footnote{We only use Gauss-Newton non-linear quadric refinement for evaluations of accuracy and never for the actual surface estimation.} of the sphere. Our results are depicted in Fig. \ref{fig:poseacc}. Note that the errors successfully remain about the used sampling rates, which is as best as we could get.
\vspace{-3mm}
\paragraph{Type-specific detection} 
It is remarkably easy to convert our algorithm to a type specific detection by re-designing matrix $\vec{A}$. We evaluate a sphere-specific detection on scenes from~\cite{Choi2016} which contains spherical everyday objects. Tab.~\ref{tab:accuracy} summarizes the dataset and reports our accuracy while Fig.~\ref{fig:spherefit} qualitatively shows that our sphere-specific detector can indeed operate in challenging real scenarios. Due to reduced basis size ($b=1$) this type specific fit can meet real-time criteria, operating in $\sim 27\text{ms}$ on an Intel i5 CPU.
\vspace{-3mm}
\insertimageC{1}{cylinders2_cropped.pdf}{Multiple cylinder detection in clutter and occlusions: Our approach is type agnostic and uninformed about cylinders.}{fig:cylinders}{t!}
\vspace{-3mm}
\paragraph{Comparison to model based detectors} 
The literature is overwhelmed by the number of 3d model based pose estimation methods. Hence, we decide to compare our model-free approach to the model based ones. For that, we take the cylinders subset of the recent ITODD dataset~\cite{Drost2017} and run our generic quadric detector without training or specifying the type. Visual results of different methods are presented in Fig. \ref{fig:cylinders} whereas detection performance are reported in Tab.~\ref{tab:cylinders}. Our task is not to explicitly estimate the pose. Thus, we manually accept a hypothesis if ICP~\cite{Fitzgibbon2003} converges to a visually pleasing outcome. Note, multiple models are an important source of confusion for us, as we vote on generic quadrics. However, our algorithm outperforms certain detectors, even when we are solving a more generic problem as our shapes are allowed to deform into geometries other than cylinders. 
\begin{table}[t!]
  \centering
  \caption{Results on ITODD~\cite{Drost2017} cylinders: Even without looking for a cylinder, we can do better than the model based ~\cite{Ulrich2012}.}
  \setlength\tabcolsep{2 pt}
  \resizebox{\columnwidth}{!}{
    \begin{tabular}{cccccc}
    \toprule
    PPF3D & PPF3D-E & PPF3D-E-2D & S2D~\cite{Ulrich2012}   & RANSAC~\cite{Papazov2010} & Ours \\
    \midrule
72\% & 73\% & 74\% & 24\% & 86\% & 41.9\% \\ 
    \bottomrule
    \end{tabular}%
  \label{tab:cylinders}%
  }
\end{table}%
\vspace{-3mm}
\paragraph{Cons} 
Unless made specific, our method is surpassed by type-specific fits in detection rate.
However, we detect a larger variety of shapes.
Yet, if $\bm{A}$ targets a specific type, we perform even better. Degenerate cases are also difficult for us, but we find a close approximation - see Fig. \ref{fig:planes}.
\insertimageC{1}{quadric_planes_cropped.pdf}{When planes gather a majority of the votes, our method approximates them by a close non-degenerate quadric surface.}{fig:planes}{h}
\vspace{-1.5mm}
\section{Conclusion}
We presented a fast and robust approach for generic primitive detection in noisy / cluttered 3D scenes. In a nutshell, a novel, linear fitting formulation for oriented point quadruplets is backed by an efficient null-space voting, potentially paving the way towards real-time operation. We establish the minimalist case known up to now: 3 oriented points. Optionally, we can convert to a type-specific fit to boost speed and accuracy. Future work will address the enlisted limitations and further analysis of the bias in the fit.

{
\bibliographystyle{ieee}
\bibliography{egbib}

\begin{thebibliography}{10}\itemsep=-1pt

\bibitem{Allaire2007}
S.~Allaire, J.-J. Jacq, V.~Burdin, C.~Roux, and C.~Couture.
\newblock Type-constrained robust fitting of quadrics with application to the
  3d morphological characterization of saddle-shaped articular surfaces.
\newblock In {\em Computer Vision, 2007. ICCV 2007. IEEE 11th International
  Conference on}, pages 1--8. IEEE, 2007.

\bibitem{Andrews2013}
J.~Andrews.
\newblock User-guided inverse 3d modeling.
\newblock Technical Report UCB/EECS-2013-103, EECS Department. University of
  California, Berkeley, May 2013.

\bibitem{Andrews2014}
J.~Andrews and C.~H. S{\'e}quin.
\newblock Type-constrained direct fitting of quadric surfaces.
\newblock {\em Computer-Aided Design and Applications}, 11(1):107--119, 2014.

\bibitem{beale2016fitting}
D.~Beale, Y.-L. Yang, N.~Campbell, D.~Cosker, and P.~Hall.
\newblock Fitting quadrics with a bayesian prior.
\newblock {\em Computational Visual Media}, 2(2):107--117, 2016.

\bibitem{Birdal2016}
T.~Birdal, I.~Dobryden, and S.~Ilic.
\newblock X-tag: A fiducial tag for flexible and accurate bundle adjustment.
\newblock In {\em IEEE International Conference on 3DVision}, October 2016.

\bibitem{birdal2015point}
T.~Birdal and S.~Ilic.
\newblock Point pair features based object detection and pose estimation
  revisited.
\newblock In {\em 3D Vision (3DV), 2015 International Conference on}, pages
  527--535. IEEE, 2015.

\bibitem{birdal2017iccv}
T.~Birdal and S.~Ilic.
\newblock Cad priors for accurate and flexible instance reconstruction.
\newblock In {\em 2017 IEEE International Conference on Computer Vision
  (ICCV)}, pages 133--142, Oct 2017.

\bibitem{birdal2017iros}
T.~Birdal and S.~Ilic.
\newblock A point sampling algorithm for 3d matching of irregular geometries.
\newblock In {\em 2017 IEEE/RSJ International Conference on Intelligent Robots
  and Systems (IROS)}, pages 6871--6878, Sept 2017.

\bibitem{Blane2000}
M.~M. Blane, Z.~Lei, H.~{\c{C}}ivi, and D.~B. Cooper.
\newblock The 3l algorithm for fitting implicit polynomial curves and surfaces
  to data.
\newblock {\em IEEE Transactions on Pattern Analysis and Machine Intelligence},
  22(3):298--313, 2000.

\bibitem{borrmann20113d}
D.~Borrmann, J.~Elseberg, K.~Lingemann, and A.~N{\"u}chter.
\newblock The 3d hough transform for plane detection in point clouds: A review
  and a new accumulator design.
\newblock {\em 3D Research}, 2(2):3, 2011.

\bibitem{Choi2016}
S.~Choi, Q.-Y. Zhou, S.~Miller, and V.~Koltun.
\newblock A large dataset of object scans.
\newblock {\em arXiv:1602.02481}, 2016.

\bibitem{Cross1998}
G.~Cross and A.~Zisserman.
\newblock Quadric reconstruction from dual-space geometry.
\newblock In {\em Computer Vision, 1998. Sixth International Conference on},
  pages 25--31. IEEE, 1998.

\bibitem{ppfnet}
H.~Deng, T.~Birdal, and S.~Ilic.
\newblock Ppfnet: Global context aware local features for robust 3d point
  matching.
\newblock In {\em Computer Vision and Pattern Recognition (CVPR)}. IEEE, 2018.

\bibitem{di2015geometric}
L.~Di~Angelo and P.~Di~Stefano.
\newblock Geometric segmentation of 3d scanned surfaces.
\newblock {\em Computer-Aided Design}, 62:44--56, 2015.

\bibitem{Digne2016sparse}
J.~Digne, S.~Valette, and R.~Chaine.
\newblock Sparse geometric representation through local shape probing.
\newblock {\em arXiv preprint arXiv:1612.02261}, 2016.

\bibitem{drost2015local}
B.~Drost and S.~Ilic.
\newblock Local hough transform for 3d primitive detection.
\newblock In {\em 3D Vision (3DV), 2015 International Conference on}, pages
  398--406. IEEE, 2015.

\bibitem{Drost2017}
B.~Drost, M.~Ulrich, P.~Bergmann, P.~Hartinger, and C.~Steger.
\newblock Introducing mvtec itodd - a dataset for 3d object recognition in
  industry.
\newblock In {\em The IEEE International Conference on Computer Vision (ICCV)},
  Oct 2017.

\bibitem{Fischler1981}
M.~A. Fischler and R.~C. Bolles.
\newblock Random sample consensus: A paradigm for model fitting with
  applications to image analysis and automated cartography.
\newblock {\em Commun. ACM}, 24(6):381--395, June 1981.

\bibitem{Fitzgibbon2003}
A.~W. Fitzgibbon.
\newblock Robust registration of 2d and 3d point sets.
\newblock {\em Image and Vision Computing}, 21(13):1145--1153, 2003.

\bibitem{fitzgibbon1996buyer}
A.~W. Fitzgibbon, R.~B. Fisher, et~al.
\newblock A buyer's guide to conic fitting.
\newblock {\em DAI Research paper}, 1996.

\bibitem{Frahm2006}
J.-M. Frahm and M.~Pollefeys.
\newblock Ransac for (quasi-) degenerate data (qdegsac).
\newblock In {\em Computer Vision and Pattern Recognition, 2006 IEEE Computer
  Society Conference on}, volume~1, pages 453--460. IEEE, 2006.

\bibitem{frahm2006ransac}
J.-M. Frahm and M.~Pollefeys.
\newblock Ransac for (quasi-) degenerate data (qdegsac).
\newblock In {\em Computer Vision and Pattern Recognition, 2006 IEEE Computer
  Society Conference on}, volume~1, pages 453--460. IEEE, 2006.

\bibitem{garcia2017review}
A.~Garcia-Garcia, S.~Orts-Escolano, S.~Oprea, V.~Villena-Martinez, and
  J.~Garcia-Rodriguez.
\newblock A review on deep learning techniques applied to semantic
  segmentation.
\newblock {\em arXiv preprint arXiv:1704.06857}, 2017.

\bibitem{Gay2017ICCV}
P.~Gay, C.~Rubino, V.~Bansal, and A.~Del~Bue.
\newblock Probabilistic structure from motion with objects (psfmo).
\newblock In {\em The IEEE International Conference on Computer Vision (ICCV)},
  Oct 2017.

\bibitem{georgiev2016real}
K.~Georgiev, M.~Al-Hami, and R.~Lakaemper.
\newblock Real-time 3d scene description using spheres, cones and cylinders.
\newblock {\em arXiv preprint arXiv:1603.03856}, 2016.

\bibitem{gotardo2004robust}
P.~F. Gotardo, K.~L. Boyer, O.~R.~P. Bellon, and L.~Silva.
\newblock Robust extraction of planar and quadric surfaces from range images.
\newblock In {\em Pattern Recognition, 2004. ICPR 2004. Proceedings of the 17th
  International Conference on}, volume~2, pages 216--219. IEEE, 2004.

\bibitem{guennebaud2007algebraic}
G.~Guennebaud and M.~Gross.
\newblock Algebraic point set surfaces.
\newblock In {\em ACM Transactions on Graphics (TOG)}, volume 26.3, page~23.
  ACM, 2007.

\bibitem{hartley1997defense}
R.~I. Hartley.
\newblock In defense of the eight-point algorithm.
\newblock {\em IEEE Transactions on pattern analysis and machine intelligence},
  19(6):580--593, 1997.

\bibitem{Hoppe1992}
H.~Hoppe, T.~DeRose, T.~Duchamp, J.~McDonald, and W.~Stuetzle.
\newblock {\em Surface reconstruction from unorganized points}, volume 26.2.
\newblock ACM, 1992.

\bibitem{kanatani2005further}
K.-i. Kanatani.
\newblock Further improving geometric fitting.
\newblock In {\em 3-D Digital Imaging and Modeling, 2005. 3DIM 2005. Fifth
  International Conference on}, pages 2--13. IEEE, 2005.

\bibitem{kukelova2016efficient}
Z.~Kukelova, J.~Heller, and A.~Fitzgibbon.
\newblock Efficient intersection of three quadrics and applications in computer
  vision.
\newblock In {\em Proceedings of the IEEE Conference on Computer Vision and
  Pattern Recognition}, pages 1799--1808, 2016.

\bibitem{li2011globfit}
Y.~Li, X.~Wu, Y.~Chrysathou, A.~Sharf, D.~Cohen-Or, and N.~J. Mitra.
\newblock Globfit: Consistently fitting primitives by discovering global
  relations.
\newblock In {\em ACM Transactions on Graphics (TOG)}, volume 30.4, page~52.
  ACM, 2011.

\bibitem{liu2016robust}
Y.~Liu.
\newblock Robust segmentation of raw point clouds into consistent surfaces.
\newblock {\em Science China Technological Sciences}, 59(8):1156--1166, 2016.

\bibitem{lopez2016robust}
E.~L{\'o}pez-Rubio, K.~Thurnhofer-Hemsi, {\'O}.~D. de~C{\'o}zar-Mac{\'\i}as,
  E.~B. Bl{\'a}zquez-Parra, J.~Mu{\~n}oz-P{\'e}rez, and I.~L.
  de~Guevara-L{\'o}pez.
\newblock Robust fitting of ellipsoids by separating interior and exterior
  points during optimization.
\newblock {\em Journal of Mathematical Imaging and Vision}, pages 1--22, 2016.

\bibitem{makhal2017grasping}
A.~Makhal, F.~Thomas, and A.~P. Gracia.
\newblock Grasping unknown objects in clutter by superquadric representation.
\newblock {\em arXiv preprint arXiv:1710.02121}, 2017.

\bibitem{miller1988}
J.~R. Miller.
\newblock Analysis of quadric-surface-based solid models.
\newblock {\em IEEE Computer Graphics and Applications}, 8(1):28--42, Jan 1988.

\bibitem{minto2016scene}
L.~Minto, G.~Pagnutti, and P.~Zanuttigh.
\newblock Scene segmentation driven by deep learning and surface fitting.
\newblock In {\em Computer Vision--ECCV 2016 Workshops}, pages 118--132.
  Springer, 2016.

\bibitem{Moore1920}
E.~MOORE.
\newblock On the reciprocal of the general algebraic matrix.
\newblock {\em Bull. Am. Math. Soc.}, 26:394--395, 1920.

\bibitem{morwald2013geometric}
T.~M{\"o}rwald, A.~Richtsfeld, J.~Prankl, M.~Zillich, and M.~Vincze.
\newblock Geometric data abstraction using b-splines for range image
  segmentation.
\newblock In {\em Robotics and Automation (ICRA), 2013 IEEE International
  Conference on}, pages 148--153. IEEE, 2013.

\bibitem{Papazov2010}
C.~Papazov and D.~Burschka.
\newblock An efficient ransac for 3d object recognition in noisy and occluded
  scenes.
\newblock In {\em Asian Conference on Computer Vision}, pages 135--148.
  Springer, 2010.

\bibitem{Pas2013}
A.~t. Pas and R.~Platt.
\newblock Localizing grasp affordances in 3-d points clouds using taubin
  quadric fitting.
\newblock {\em arXiv preprint arXiv:1311.3192}, 2013.

\bibitem{Petitjean2002}
S.~Petitjean.
\newblock A survey of methods for recovering quadrics in triangle meshes.
\newblock {\em ACM Computing Surveys (CSUR)}, 34(2):211--262, 2002.

\bibitem{qin1997representing}
K.~Qin.
\newblock Representing quadric surfaces using nurbs surfaces.
\newblock {\em Journal of Computer Science and Technology}, 12(3):210--216,
  1997.

\bibitem{qin1998representing}
K.~Qin, W.~Wang, and Z.~Tang.
\newblock Representing spheres and ellipsoids using periodic nurbs surfaces
  with fewer control vertices.
\newblock In {\em Computer Graphics and Applications, 1998. Pacific Graphics'
  98. Sixth Pacific Conference on}, pages 210--211. IEEE, 1998.

\bibitem{Qiu2014}
R.~Qiu, Q.-Y. Zhou, and U.~Neumann.
\newblock Pipe-run extraction and reconstruction from point clouds.
\newblock In {\em European Conference on Computer Vision}, pages 17--30.
  Springer, 2014.

\bibitem{rosenblatt1956remarks}
M.~Rosenblatt et~al.
\newblock Remarks on some nonparametric estimates of a density function.
\newblock {\em The Annals of Mathematical Statistics}, 27(3):832--837, 1956.

\bibitem{Schnabel2007}
R.~Schnabel, R.~Wahl, and R.~Klein.
\newblock Efficient ransac for point-cloud shape detection.
\newblock In {\em Computer graphics forum}, volume 26.2, pages 214--226. Wiley
  Online Library, 2007.

\bibitem{sveier2017object}
A.~Sveier, A.~L. Kleppe, L.~Tingelstad, and O.~Egeland.
\newblock Object detection in point clouds using conformal geometric algebra.
\newblock {\em Advances in Applied Clifford Algebras}, pages 1--16, 2017.

\bibitem{Tasdizen2001}
T.~Tasdizen.
\newblock {\em Robust and repeatable fitting of implicit polynomial curves to
  point data sets and to intensity images}.
\newblock PhD thesis, Brown University, 2001.

\bibitem{Tasdizen1999}
T.~Tasdizen, J.-P. Tarel, and D.~B. Cooper.
\newblock Algebraic curves that work better.
\newblock In {\em Computer Vision and Pattern Recognition, 1999. IEEE Computer
  Society Conference on.}, volume~2, pages 35--41. IEEE, 1999.

\bibitem{Taubin1991}
G.~Taubin.
\newblock Estimation of planar curves, surfaces, and nonplanar space curves
  defined by implicit equations with applications to edge and range image
  segmentation.
\newblock {\em IEEE Transactions on Pattern Analysis and Machine Intelligence},
  13(11):1115--1138, 1991.

\bibitem{Tran2015}
T.-T. Tran, V.-T. Cao, and D.~Laurendeau.
\newblock Extraction of reliable primitives from unorganized point clouds.
\newblock {\em 3D Research}, 6(4):44, 2015.

\bibitem{Ulrich2012}
M.~Ulrich, C.~Wiedemann, and C.~Steger.
\newblock Combining scale-space and similarity-based aspect graphs for fast 3d
  object recognition.
\newblock {\em IEEE transactions on pattern analysis and machine intelligence},
  34(10):1902--1914, 2012.

\bibitem{Uto2013}
S.~Uto, T.~Tsuji, K.~Harada, R.~Kurazume, and T.~Hasegawa.
\newblock Grasp planning using quadric surface approximation for parallel
  grippers.
\newblock In {\em Robotics and Biomimetics (ROBIO), 2013 IEEE International
  Conference on}, pages 1611--1616. IEEE, 2013.

\bibitem{Weisstein2017}
E.~W. Weisstein.
\newblock Quadratic surface.
\newblock MathWorld - A Wolfram Web Resource, 2017.

\bibitem{yan2006quadric}
D.-M. Yan, Y.~Liu, and W.~Wang.
\newblock Quadric surface extraction by variational shape approximation.
\newblock In {\em International Conference on Geometric Modeling and
  Processing}, pages 73--86. Springer, 2006.

\bibitem{yan2012variational}
D.-M. Yan, W.~Wang, Y.~Liu, and Z.~Yang.
\newblock Variational mesh segmentation via quadric surface fitting.
\newblock {\em Computer-Aided Design}, 44(11):1072--1082, 2012.

\bibitem{You2017}
S.~You and D.~Zhang.
\newblock Think locally, fit globally: Robust and fast 3d shape matching via
  adaptive algebraic fitting.
\newblock {\em Neurocomputing}, 2017.

\bibitem{Zhao20163}
H.~Zhao, D.~Yuan, H.~Zhu, and J.~Yin.
\newblock 3-d point cloud normal estimation based on fitting algebraic spheres.
\newblock In {\em Image Processing (ICIP), 2016 IEEE International Conference
  on}, pages 2589--2592. IEEE, 2016.

\end{thebibliography}
}

\end{document}